\documentclass{article}

\usepackage[preprint]{neurips_2026}

\usepackage[utf8]{inputenc} 
\usepackage[T1]{fontenc}    
\usepackage{booktabs}       
\usepackage{amsfonts}       
\usepackage{nicefrac}       
\usepackage{microtype}      
\usepackage{xcolor}
\usepackage[
  pagebackref=true,
  breaklinks=true,
  colorlinks=true,
  urlcolor={red!90!black},
  linkcolor={red!90!black},
  citecolor={blue!90!black},
  bookmarks=false
]{hyperref}
\usepackage{graphicx}
\usepackage{amsthm}
\usepackage{mathtools}
\usepackage{algorithm}
\usepackage{algpseudocode}
\usepackage{multirow}
\usepackage[table]{xcolor}
\usepackage{subcaption}
\usepackage{placeins}
\usepackage{float}

\usepackage{amsmath,amsfonts,bm}

\def\rvc{{\boldsymbol{c}}}

\def\rvx{{\boldsymbol{x}}}

\newcommand{\norm}[1]{\left\lVert#1\right\rVert}
\newtheorem{prop}{Proposition}

\newtheorem{cor}{Corollary}

\theoremstyle{remark}

\usepackage{enumitem}
\usepackage{hyperref}       
\usepackage{thmtools}
\usepackage{thm-restate}

\definecolor{todopurple}{HTML}{9370DB}

\title{Beyond Pairwise Preferences: Listwise Reward-Aware Alignment for Diffusion Models}

\author{%
  Austin Wang \\
  Caltech \\
\texttt{akwang@caltech.edu} \\
  \And
  Jiaqi Han \\
  Stanford University \\
\texttt{jiaqihan@stanford.edu} \\
  \AND
  Stefano Ermon \\
  Stanford University \\
\texttt{ermon@cs.stanford.edu} \\
  \And
  Yisong Yue \\
  Caltech \\
  \texttt{yyue@caltech.edu} \\
}

\begin{document}

\maketitle

\begin{abstract}

  Preference optimization has emerged as an efficient alternative to online reinforcement learning from human feedback (RLHF) for aligning text-to-image diffusion models. However, existing methods largely reduce supervision to binary pairwise comparisons. This pairwise reduction is limiting when training data naturally contains multiple candidate images for the same prompt, and when continuous reward scores can provide richer information than a single winner–loser label. To address these limitations, we propose Diffusion LAIR, a reward-aware listwise preference optimization method for diffusion models. For each prompt, LAIR converts reward scores across a group of candidate images into centered advantage weights, then optimizes an advantage-weighted regression objective on the implicit reward, defined as the denoising-loss improvement of the current model over a fixed reference model, with a quadratic penalty that regularizes the magnitude of the implicit reward. The resulting objective uses all candidates simultaneously rather than selecting pairs, and remains conservative by explicitly controlling the magnitude of the implicit reward. The LAIR objective admits a bounded closed-form optimum in implicit-reward space, clarifying how the regularization strength controls the magnitude of the preference update. Experiments show that Diffusion LAIR outperforms strong  preference optimization baselines on SD1.5 and SDXL across text-to-image generation, compositional generation, and image editing benchmarks. Model checkpoints can be found \href{https://huggingface.co/collections/austin-k-wang/diffusion-lair-models}{here}.
\end{abstract}

\section{Introduction}

\begin{figure}[!h]
    \centering
    \includegraphics[
        width=0.9\linewidth,
        height=0.9\textheight,
        keepaspectratio
    ]{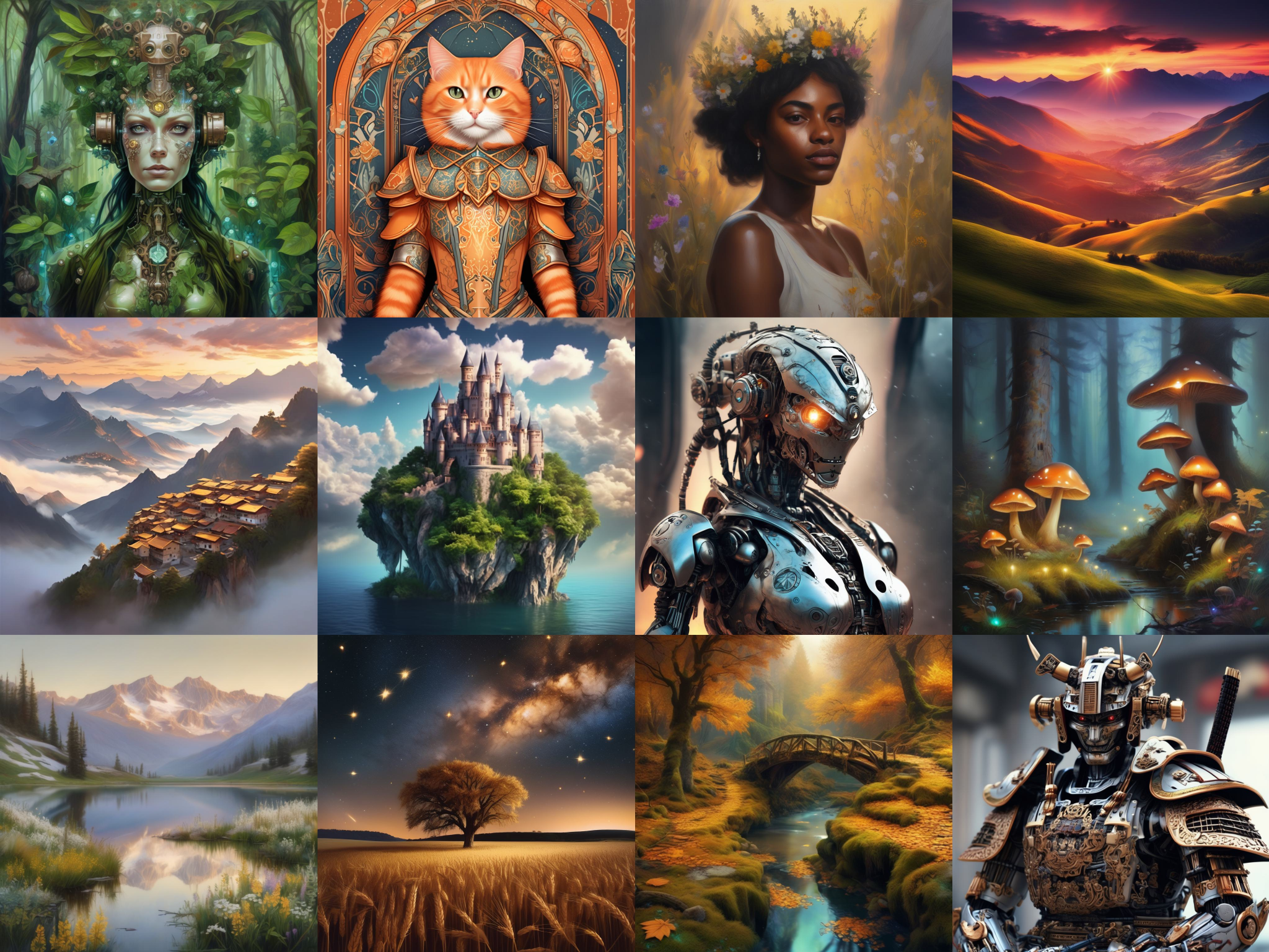}
    \caption{Sample images generated from SDXL trained with Diffusion LAIR.}
    \label{fig:sdxl-compare-baselines}
    \vspace{-0.1in}
\end{figure}


Text-to-image diffusion models, such as Stable Diffusion \citep{podell2023sdxl, rombach2022high} and Imagen \citep{saharia2022photorealistic}, have become a standard backbone for high-quality image generation. Large-scale pretraining, however, does not by itself guarantee alignment with human preferences. Post-training is therefore needed to improve aspects like aesthetics, prompt following, and overall sample quality. As a result, diffusion preference optimization has emerged as a practical approach to this problem by adapting preference-learning objectives to denoising diffusion models \citep{wallace2024diffusion, zhu2025dspo, li2024aligning, hong2026margin, han2025discrete, li2025divergence}.



Although online reinforcement learning algorithms for diffusion alignment \citep{fan2023dpok, black2023training, clark2023directly, liu2025flow, zheng2025diffusionnft, choi2026rethinking} have been extensively explored, their on-policy nature tightly couples generation, reward evaluation, and policy updates, making training expensive and difficult to parallelize. Offline preference optimization avoids this cost by learning from pre-collected preference data. This setting is increasingly natural: modern preference pipelines often generate multiple candidate images for the same prompt and evaluate them with reward models that provide graded, rather than purely binary, supervision \citep{lee2025calibrated, karthik2025scalable}. Yet most diffusion preference objectives still inherit the pairwise structure of direct preference optimization (DPO), where each update compares one preferred image against one dispreferred image. This pairwise reduction discards the remaining candidates and ignores the magnitude of reward gaps. Recent approaches \citep{lee2025calibrated, liang2025aesthetic} use vision-language reward models to construct richer preference signals, but still rely on pair selection heuristics to develop pairwise objectives.

To address this limitation, we propose Diffusion LAIR (Listwise Advantage-weighted Implicit Reward), which treats diffusion preference optimization as a listwise, reward-aware learning problem. For each prompt, Diffusion LAIR converts reward scores across a group of candidate images into centered advantage weights, which promote high-reward samples and suppress low-reward samples relative to the group. Diffusion LAIR then optimizes an advantage-weighted implicit reward, defined by the denoising-loss improvement of the current model over a fixed reference model, with a quadratic penalty that keeps the update conservative. This construction preserves the reference-based structure of diffusion preference optimization while distributing learning signal across all candidates rather than selected comparison pairs. The resulting objective admits a closed-form optimum in implicit-reward space, where each sample receives a finite target proportional to its centered advantage weight, clarifying how regularization controls the size of the preference update.

\textbf{Our contributions}
\vspace{-0.5pt}
\begin{itemize}[leftmargin=1.5em,itemsep=0.05em, topsep=0pt]
    \item We propose Diffusion LAIR, a preference optimization objective that learns directly from groups of reward-scored images, rather than reducing supervision to binary preference pairs.
    \item We show that the Diffusion LAIR objective admits a bounded closed-form optimum in implicit-reward space, providing insight into how regularization controls preference updates and induced distribution shift.
    \item We demonstrate strong empirical performance across text-to-image generation, compositional generation, and instruction-based image editing on SD1.5 and SDXL.
\end{itemize}


\section{Related Work}

\textbf{Reward and Preference Alignment.}
Online reinforcement learning post-training is widely used to align foundation models with human preferences, particularly for language models~\citep{christiano2017deep,stiennon2020learning,ziegler2019fine,bai2022training}. For diffusion models, prior work either backpropagates differentiable rewards through denoising chains~\citep{clark2023directly,prabhudesai2023aligning,ren2025half} or applies RL objectives to sampled diffusion trajectories~\citep{black2023training,fan2023dpok,liu2025flow,xue2025dancegrpo,zheng2025diffusionnft,xue2025advantage,choi2026rethinking,han2025discrete,ye2025data}. These approaches can achieve strong reward gains but often require expensive trajectory sampling and may be sensitive to reward hacking or optimization instability. Complementary inference-time methods steer generation through reward-guided sampling, search, or derivative-free posterior inference, but leave model parameters unchanged and incur additional sampling cost~\citep{singhal2025general,uehara2025inference,bansal2024universal,yeh2025training,zheng2025blade,wang2025ensemble}. In contrast, we study offline preference alignment, where the diffusion model is post-trained solely from pre-collected preference data.

\textbf{Offline Preference Alignment for Diffusion Models.}
Offline preference optimization has become an attractive alternative to traditional RLHF techniques. Crucially, these methods only require preference data, rather than full denoising trajectories, making them significantly more efficient than online RL algorithms. \citep{wallace2024diffusion, yang2024using} are pioneer works that extend DPO \citep{rafailov2023direct} to diffusion models, learning directly from human preference data. More recent state-of-the-art works include DSPO \citep{zhu2025dspo}, which integrates DPO with denoising score matching; Diffusion KTO \citep{li2024aligning}, which formulates alignment as utility maximization from per-sample binary feedback; and InPO \citep{lu2025inpo}, which improves efficiency by using reparameterized DDIM inversion to identify and optimize latent variables that are most relevant to preference alignment. Nevertheless, most offline diffusion alignment methods ultimately reduce supervision to pairwise comparisons with binary preference labels, missing out on finer-grained signals like listwise rankings and reward scores. This pairwise--listwise distinction also connects our setting to classical learning-to-rank methods such as RankNet, ListNet, and ListMLE~\citep{burges2005ranknet,cao2007listnet,xia2008listmle}, which similarly distinguish pairwise and listwise supervision, though our objective operates on diffusion implicit rewards rather than learned ranking scores.

Recent methods incorporate reward signals to construct rankings or calibrate pairwise preferences~\citep{karthik2025scalable,lee2025calibrated}, but still optimize pairwise or ordinal objectives. Most closely related, Diffusion LPO~\citep{bai2025towards} preserves list-level structure using a Plackett--Luce objective over ranked image lists, but primarily exploits ordinal supervision. In contrast, Diffusion LAIR directly uses continuous reward scores over the full candidate list, preserving both relative ordering and cardinal reward gaps through candidate-wise implicit-reward targets.

\section{Background and Preliminaries}

\textbf{Diffusion Models.}
Denoising diffusion models \citep{ho2020denoising, song2020score, karras2022elucidating} are generative models that enable sampling from a data distribution $q(\rvx_0)$. Given a noise schedule $\alpha_t$ and $\sigma_t$, they define a reverse Markov process $p_\theta(\rvx_{0:T}) = p_\theta(\rvx_T) \prod_{t=1}^T p_\theta(\rvx_{t-1} \mid \rvx_t),$
where each reverse transition is typically parameterized as a Gaussian, $p_\theta(\rvx_{t-1} \mid \rvx_t)
= \mathcal{N}\!\left(\rvx_{t-1}; \mu_\theta(\rvx_t, t), \Sigma_t \right).$
Training is commonly carried out by minimizing the evidence lower bound (ELBO), which in practice yields the standard denoising objective
\[
\mathcal{L}_{\mathrm{DM}}
=
\mathbb{E}_{\rvx_0,\epsilon,t,\rvx_t}
\left[
\omega(\lambda_t)\,\|\epsilon-\epsilon_\theta(\rvx_t,t)\|_2^2
\right],
\]
where $\epsilon \sim \mathcal{N}(0,I)$, $t \sim \mathcal{U}(0,T)$, and $\rvx_t \sim q(\rvx_t \mid \rvx_0)
= \mathcal{N}(\alpha_t \rvx_0, \sigma_t^2 I)$. Here, $\lambda_t = \alpha_t^2 / \sigma_t^2$ denotes the signal-to-noise ratio, and $\omega(\lambda_t)$ is a pre-specified timestep weighting function.

\textbf{Direct Preference Optimization.}
Direct Preference Optimization (DPO) \citep{rafailov2023direct} is a preference alignment method that directly optimizes a policy from pairwise preference data. In the standard KL-regularized preference alignment setting, one seeks a policy \(p_{\theta}(\rvx \mid \rvc)\) that maximizes reward while remaining close to a fixed reference model \(p_{\mathrm{ref}}(\rvx \mid \rvc)\):
\begin{equation}
    \max_{p_{\theta}}
    \;
    \mathbb{E}_{\rvc \sim \mathcal{D},\, \rvx \sim p_{\theta}(\cdot \mid \rvc)}
    \bigl[r(\rvc,\rvx)\bigr]
    -
    \beta D_{\mathrm{KL}}\!\left(p_{\theta}(\rvx \mid \rvc)\,\|\,p_{\mathrm{ref}}(\rvx \mid \rvc)\right).
\end{equation}

The corresponding optimal policy takes the reward-tilted form
$p^{*}(\rvx \mid \rvc) \propto p_{\mathrm{ref}}(\rvx \mid \rvc)
\exp\!\left(r(\rvc,\rvx)/\beta\right)$, which implies
$r(\rvc,\rvx) = \beta \log \frac{p^{*}(\rvx \mid \rvc)}
{p_{\mathrm{ref}}(\rvx \mid \rvc)} + \beta \log Z(\rvc)$, where \(Z(\rvc)\) is a partition function independent of \(\rvx\). Thus, the reward can be represented, up to an additive constant, by the log-ratio between the optimal policy and the reference policy (often referred to as the implicit reward). 

Given pairwise preference data \((\rvc,\rvx^{w},\rvx^{l})\), where \(\rvx^{w}\) is preferred over \(\rvx^{l}\), DPO combines this reparameterization with the Bradley-Terry model \citep{bradley1952rank} to obtain
\begin{equation}
    \mathcal{L}_{\mathrm{DPO}}(\theta)
    =
    -
    \mathbb{E}_{(\rvc,\rvx^{w},\rvx^{l})}
    \left[
        \log \sigma \left(
            \beta \log \frac{p_{\theta}(\rvx^{w} \mid \rvc)}{p_{\mathrm{ref}}(\rvx^{w} \mid \rvc)}
            -
            \beta \log \frac{p_{\theta}(\rvx^{l} \mid \rvc)}{p_{\mathrm{ref}}(\rvx^{l} \mid \rvc)}
        \right)
    \right].
\end{equation}

While the log-ratio between $p_\theta$ and $p_\mathrm{ref}$ is easy to compute for autoregressive language models, it is intractable for diffusion models, motivating diffusion-specific approximations based on denoising performance.


\textbf{DPO for Diffusion Models.}
Diffusion-DPO \citep{wallace2024diffusion} adapts DPO to diffusion models by replacing the intractable log-likelihood ratio with a surrogate based on denoising performance. Given a noisy latent
\(\rvx_t = \alpha_t \rvx_0 + \sigma_t \epsilon\)
at timestep \(t\) under conditioning \(\rvc\), let
\begin{equation}
    l_{\theta}(\rvx_t, t, \rvc, \epsilon)
    :=
    \norm{\epsilon_{\theta}(\rvx_t, t, \rvc) - \epsilon}_2^2,
    \qquad
    l_{\mathrm{ref}}(\rvx_t, t, \rvc, \epsilon)
    :=
    \norm{\epsilon_{\mathrm{ref}}(\rvx_t, t, \rvc) - \epsilon}_2^2
\end{equation}
denote the denoising squared errors of the current model and a fixed reference model, respectively. Let
\(\lambda_t := \alpha_t^2/\sigma_t^2\) denote the signal-to-noise ratio. Since the diffusion ELBO decomposes into timestep-dependent weighted denoising errors, the log-ratio
\(\log \frac{p_{\theta}(\rvx_0 \mid \rvc)}{p_{\mathrm{ref}}(\rvx_0 \mid \rvc)}\)
can be approximated by the expected denoising-loss improvement
\begin{equation}
    S_{\theta}(\rvx_0,\rvc)
    :=
    \mathbb{E}_{t,\epsilon}
    \left[
        s_{\theta}(\rvx_0,\rvx_t,t,\rvc,\epsilon)
    \right],
    \qquad
    s_{\theta}(\rvx_0,\rvx_t,t,\rvc,\epsilon)
    :=
    \omega(\lambda_t)
    \left(
        l_{\mathrm{ref}}
        -
        l_{\theta}
    \right),
    \label{eq:implicit-reward}
\end{equation}
where \(\omega(\lambda_t)>0\) is the timestep-dependent coefficient induced by the diffusion ELBO, often set as a constant in practice \citep{wallace2024diffusion, xue2025advantage, ho2020denoising}. Thus, \(S_\theta(\rvx_0,\rvc)\) approximates the log-ratio at the clean sample level, while \(s_\theta(\rvx_0,\rvx_t,t,\rvc,\epsilon)\) is a single Monte Carlo contribution used during training.

For pairwise preference data \((\rvc,\rvx_0^w,\rvx_0^l)\), where \(\rvx_0^w\) is preferred to \(\rvx_0^l\), the ideal Diffusion-DPO objective compares \(S_\theta(\rvx_0^w,\rvc)\) and \(S_\theta(\rvx_0^l,\rvc)\). In practice, this expectation is estimated by sampling timesteps and noise, giving
\begin{equation}
    \mathcal{L}_{\mathrm{DiffusionDPO}}(\theta)
    =
    - \mathbb{E}_{(\rvc,\rvx_0^w,\rvx_0^l),\,t,\,\epsilon^w,\,\epsilon^l}
    \left[
        \log \sigma
        \left(
            \beta
            \left(
                s_{\theta}(\rvx_0^w,\rvx_t^w,t,\rvc,\epsilon^w)
                -
                s_{\theta}(\rvx_0^l,\rvx_t^l,t,\rvc,\epsilon^l)
            \right)
        \right)
    \right],
    \label{eq:diffusion-dpo}
\end{equation}
where \(\beta>0\) is the preference temperature. Thus, Diffusion-DPO encourages preferred samples to receive larger implicit reward contributions than dispreferred samples, pushing the model to denoise preferred images better than the reference relative to dispreferred images.

\section{Method}

\begin{figure}[!h]
    \centering
    \includegraphics[
        width=0.9\linewidth]{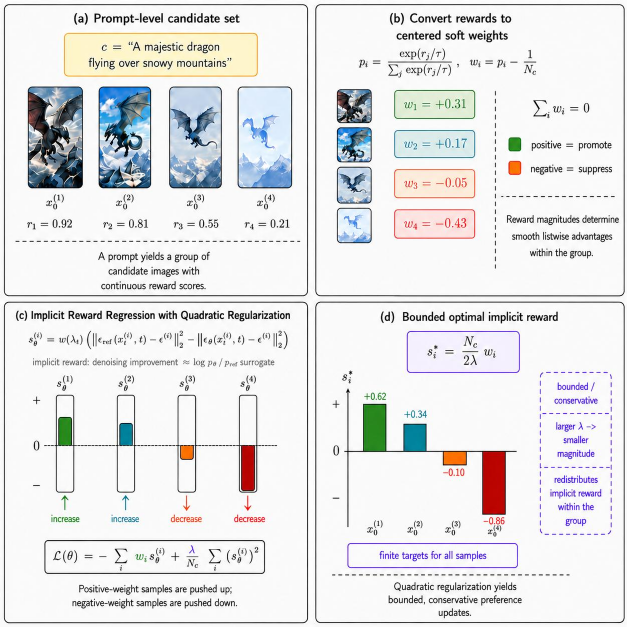}
    \caption{Overview of Diffusion LAIR. For each prompt, candidate images are scored by a reward model, converted into centered listwise advantage weights, and used to shape the diffusion implicit reward through a quadratic-regularized objective. The resulting optimum gives finite, $\lambda$-controlled implicit-reward targets that redistribute implicit reward within the candidate group.}
    \label{fig:method-diagram}
\end{figure}

\subsection{From Pairwise Preference Optimization to Listwise Reward-Aware Alignment}

Prior diffusion preference optimization methods are largely built around pairwise comparisons with binary preference labels. In contrast, we observe that many practical preference datasets, such as Pick-a-Pic~\citep{kirstain2023pick}, contain multiple images for each prompt, yielding a natural listwise supervision structure. Rather than collapsing such data into a collection of independent winner-loser pairs, we score the full candidate set for each prompt offline using a pre-trained reward model, obtaining continuous-valued signals that reflect relative sample quality across the group.

Formally, for each prompt \(\rvc\), let \(\{\rvx_{0}^{(i)}\}_{i=1}^{N_c}\) denote a set of \(N_c\) candidate images associated with \(\rvc\), where \(N_c\) may vary across prompts, and let \(r_i = r(\rvc, \rvx_{0}^{(i)})\) denote the corresponding reward score for candidate \(\rvx_{0}^{(i)}\). Each training example is therefore a reward-labeled list
\[
\{(\rvx_{0}^{(i)}, r_i)\}_{i=1}^{N_c},
\]
from which we construct our reward-aware listwise preference optimization objective.

\subsection{Diffusion LAIR: Listwise Advantage-weighted Implicit Reward Optimization}

Our objective is motivated by the same high-level principle underlying prior diffusion preference optimization methods such as Diffusion-DPO and DSPO: the model should assign higher implicit reward (defined in Equation ~\ref{eq:implicit-reward}) to preferred samples and lower implicit reward to dispreferred ones. To generalize this preference-separation principle from pairwise comparisons to listwise supervision, we first transform the reward scores within each group into normalized weights. For a prompt \(\rvc\) with candidate set \(\{\rvx_0^{(i)}\}_{i=1}^{N_c}\) and reward scores \(\{r_i\}_{i=1}^{N_c}\), we define
\begin{equation}
    p_i
    :=
    \frac{\exp(r_i / \tau)}
         {\sum_{j=1}^{N_c} \exp(r_j / \tau)},
    \qquad
    w_i
    :=
    p_i - \frac{1}{N_c},
\end{equation}
where \(\tau > 0\) is a temperature parameter. Centering removes prompt-level offsets and turns the reward distribution into a relative advantage signal, so updates promote candidates above the group baseline and suppress candidates below it.

The softmax operation is chosen such that \(p_i\) measures the relative quality of sample \(i\) within the candidate set. This yields a normalized distribution that emphasizes high-reward samples and can then be centered against the uniform baseline \(1/N_c\) to obtain signed advantage weights. The centered weights satisfy \(\sum_{i=1}^{N_c} w_i = 0\), so samples with higher reward receive positive weight and samples with lower reward receive negative weight. 

Using these weights, we define our reward-aware listwise preference optimization objective as
\begin{equation}
    \mathcal{L}_{\mathrm{Diffusion-LAIR}}(\theta)
    =
    \mathbb{E}_{\rvc,\, t,\, \{\epsilon_i\}_{i=1}^{N_c}}
    \left[
        -\sum_{i=1}^{N_c} w_i \, s_{\theta}^{(i)}
        +
        \frac{\lambda}{N_c}
        \sum_{i=1}^{N_c} \left(s_{\theta}^{(i)}\right)^2
    \right],
    \label{eq:ours_loss}
\end{equation}
where \(s_{\theta}^{(i)} := s_{\theta}(\rvx_0^{(i)}, \rvx_t^{(i)}, t, \rvc, \epsilon_i)\), and \(\lambda > 0\) controls the strength of regularization. Conceptually, our method can be thought of as advantage-weighted regression on the implicit reward with a quadratic regularization term: the first term allocates learning signal across all candidates, encouraging \(s_{\theta}\) to increase for high-reward samples with \(w_i > 0\), and to decrease for low-reward samples with \(w_i < 0\); the second term penalizes large magnitudes of $s_\theta$, yielding conservative updates and preventing aggressive deviation from the reference.


\subsection{Theoretical Analysis} \label{sec:theory}

In this section, we study the theoretical properties of our objective. We first derive the closed-form optimal implicit reward implied by our loss. Under some assumptions, this optimal implicit reward implies a surrogate bound on the KL divergence between the reference distribution and induced implicit reward-tilted distribution; while not an exact guarantee, this provides intuition into how the regularization strength $\lambda$ controls the sharpness of the induced preference tilting.


\textbf{Optimal Implicit Reward}
We characterize the implicit reward that is optimal under our objective for a fixed prompt-level candidate set. 

\begin{prop}
For fixed listwise weights \(\{w_i\}_{i=1}^{N_c}\), the objective in Eq.~\eqref{eq:ours_loss} is strictly convex in \(\{s_i\}_{i=1}^{N_c}\) and admits the unique pointwise minimizer
\begin{equation}
    s_i^*
    =
    \frac{N_c}{2\lambda} w_i,
    \qquad i=1,\dots,N_c.
\end{equation}
\label{eq:optimal_s}
\end{prop}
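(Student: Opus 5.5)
The plan is to strip away the expectation and treat the bracketed integrand of Eq.~\eqref{eq:ours_loss}, for a fixed prompt $\rvc$, timestep $t$ and noise draws, as a function of the vector $s=(s_1,\dots,s_{N_c})\in\mathbb{R}^{N_c}$ with the weights $w_i$ held fixed:
\begin{equation*}
    F(s) = -\sum_{i=1}^{N_c} w_i s_i + \frac{\lambda}{N_c}\sum_{i=1}^{N_c} s_i^2 .
\end{equation*}
In this implicit-reward space the $s_i$ are free real variables and the parametrization by $\theta$ is set aside; this is what ``pointwise minimizer'' means.

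The first key observation is that $F$ is fully separable. It equals $\sum_i f_i(s_i)$ with $f_i(u) = -w_i u + \frac{\lambda}{N_c}u^2$, so the problem reduces to minimizing $N_c$ independent one-dimensional quadratics.

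For strict convexity, I will compute the Hessian of $F$, which is $\frac{2\lambda}{N_c} I_{N_c}$. This matrix is positive definite because $\lambda>0$ and $N_c\ge 1$. Hence $F$ is strictly convex and coercive, so a unique global minimizer exists and is characterized by the first-order condition.

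To find it, I set $\partial F/\partial s_i = -w_i + \frac{2\lambda}{N_c}s_i = 0$, which gives $s_i^* = \frac{N_c}{2\lambda}w_i$. Completing the square yields the same conclusion without calculus:
\begin{equation*}
    F(s) = \frac{\lambda}{N_c}\sum_i \Bigl(s_i - \frac{N_c}{2\lambda}w_i\Bigr)^2 - \frac{N_c}{4\lambda}\sum_i w_i^2 .
\end{equation*}
This form makes uniqueness immediate, and it also gives the optimal value.

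As a remark, $\sum_i s_i^* = 0$ because the centered weights sum to zero. Also $|s_i^*|\le \frac{N_c}{2\lambda}$, since $|w_i|<1$, so the optimum is bounded.

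There is essentially no obstacle here. The only point needing care is interpretation: strict convexity holds in $s$, not in $\theta$. So the statement has to be read as an optimization over implicit-reward values, and I would state this explicitly. Because the objective is an expectation of integrands that are each minimized at the same $s^*$, the same minimizer applies after taking expectations.
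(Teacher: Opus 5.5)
Your proof is correct and follows the paper's argument: fix $(\rvc, t, \{\epsilon_i\})$, reduce the integrand to a separable quadratic in the $s_i$, solve the first-order condition to get $s_i^* = \frac{N_c}{2\lambda} w_i$, and use the positive-definite Hessian $\frac{2\lambda}{N_c} I_{N_c}$ for strict convexity and uniqueness. Your completing-the-square identity is a small addition the paper omits; it makes uniqueness and the optimal value explicit without changing the approach.
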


The proof is deferred to Section \ref{sec:opt-proof} of the appendix. Proposition~\ref{eq:optimal_s} makes the behavior of our objective transparent. Samples with relatively higher reward receive positive weights \(w_i > 0\) and are assigned positive optimal implicit reward \(s_i^* > 0\), while samples with relatively lower reward receive negative optimal implicit reward. Thus, the model is encouraged to denoise high-reward samples better than the reference model and low-reward samples worse than the reference model.


Moreover, because the weights are centered, \(\sum_{i=1}^{N_c} w_i = 0\), we also have \(\sum_{i=1}^{N_c} s_i^* = 0\), showing that the objective redistributes implicit reward within each group. This is desirable because preference optimization is fundamentally a \emph{relative} problem: for a fixed prompt, the goal is not to make every candidate more preferred, but to shift probability mass toward higher-quality samples and away from lower-quality ones. By enforcing zero-sum redistribution within the group, the objective focuses learning on improving the ranking structure implied by the rewards.

\textbf{Surrogate KL Bound.}
The boundedness of the optimal implicit reward also provides a surrogate
distribution-shift interpretation, but this interpretation does not follow
from Proposition~\ref{eq:optimal_s} alone. Proposition~\ref{eq:optimal_s}
characterizes the pointwise optimum of the sampled implicit-reward
contributions on a finite prompt-level candidate set, whereas a KL divergence
is defined over the full conditional distribution of \(\rvx_0 \mid \rvc\).
Following the clean-level implicit reward definition from Equation \ref{eq:implicit-reward}, we define
the effective optimal implicit reward
\begin{equation}
    S^*(\rvx_0,\rvc)
    :=
    \mathbb{E}_{t,\epsilon}
    \left[
        s^*(\rvx_0,\rvx_t,t,\rvc,\epsilon)
    \right].
\end{equation}
We assume that this effective implicit reward admits a measurable full-support
extension over \(\operatorname{supp}(p_{\mathrm{ref}}(\cdot \mid \rvc))\)
satisfying
\begin{equation}
    a_{\rvc}
    \leq
    S^*(\rvx_0,\rvc)
    \leq
    b_{\rvc},
    \qquad
    b_{\rvc} - a_{\rvc}
    \leq
    \Delta_{\rvc}.
\end{equation}
Here, \(\Delta_{\rvc}\) denotes the range of the extended clean-level surrogate
implicit reward. This assumption should be interpreted as an idealized
bounded-extension condition: the finite-list pointwise optimum has range at
most \(N_c/(2\lambda)\), and when the full-support extension preserves this
finite-list range after marginalizing over noising variables, one may take
\(\Delta_{\rvc} = N_c/(2\lambda)\). More generally, the KL bound below holds
for any bounded extension with range \(\Delta_{\rvc}\).

Following the standard log-ratio interpretation used in diffusion preference
optimization \citep{wallace2024diffusion, xue2025advantage, bai2025towards},
we assume that the effective implicit reward approximates a scaled density
ratio between the learned model and the reference model:
\begin{equation}
    S^*(\rvx_0,\rvc)
    \approx
    \eta
    \log
    \frac{
        p^*(\rvx_0 \mid \rvc)
    }{
        p_{\mathrm{ref}}(\rvx_0 \mid \rvc)
    },
    \qquad
    \eta > 0.
\end{equation}
Motivated by this interpretation, we define the corresponding surrogate tilted
distribution
\begin{equation}
    \widetilde p^*(\rvx_0 \mid \rvc)
    =
    \frac{1}{Z(\rvc)}
    p_{\mathrm{ref}}(\rvx_0 \mid \rvc)
    \exp\left(
        \frac{S^*(\rvx_0,\rvc)}{\eta}
    \right),
\end{equation}
where \(Z(\rvc)\) is the normalizing constant. This yields the following
corollary.

\begin{cor}
Under the log-ratio approximation and the bounded full-support extension
assumption above, the surrogate tilted distribution induced by the effective
optimal implicit reward satisfies
\begin{equation}
    D_{\mathrm{KL}}\!\left(
        \widetilde p^*(\cdot \mid \rvc)
        \,\|\, 
        p_{\mathrm{ref}}(\cdot \mid \rvc)
    \right)
    \leq
    \frac{\Delta_{\rvc}}{\eta}.
\end{equation}
In particular, if the full-support extension preserves the range of the
finite-list optimum from Proposition~\ref{eq:optimal_s}, then
\(\Delta_{\rvc} = \frac{N_c}{2\lambda}\), giving
\begin{equation}
    D_{\mathrm{KL}}\!\left(
        \widetilde p^*(\cdot \mid \rvc)
        \,\|\, 
        p_{\mathrm{ref}}(\cdot \mid \rvc)
    \right)
    \leq
    \frac{N_c}{2\lambda\eta}.
\end{equation}
\end{cor}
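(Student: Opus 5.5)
The plan is to compute the KL divergence of the tilted distribution directly and bound it using only the range of \(S^*\). Write \(g(\rvx_0) := S^*(\rvx_0,\rvc)/\eta\), so that \(g\) takes values in \([a_{\rvc}/\eta,\, b_{\rvc}/\eta]\) on the support of \(p_{\mathrm{ref}}(\cdot\mid\rvc)\). The tilted density is \(\widetilde p^* = p_{\mathrm{ref}}\, e^{g}/Z(\rvc)\) with \(Z(\rvc) = \mathbb{E}_{p_{\mathrm{ref}}}[e^{g}]\). Since \(g\) is bounded and measurable by the full-support extension assumption, \(Z(\rvc)\) is finite and positive, and \(\widetilde p^*\) is absolutely continuous with respect to \(p_{\mathrm{ref}}\). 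The KL divergence is therefore well defined and equals
\begin{equation*}
D_{\mathrm{KL}}\!\left(\widetilde p^* \,\|\, p_{\mathrm{ref}}\right) = \mathbb{E}_{\widetilde p^*}[g] - \log Z(\rvc).
\end{equation*}

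The key steps are two elementary bounds, one on each term. First, \(\mathbb{E}_{\widetilde p^*}[g] \le b_{\rvc}/\eta\), because \(g\le b_{\rvc}/\eta\) pointwise. Second, \(\log Z(\rvc) \ge a_{\rvc}/\eta\), because \(e^{g} \ge e^{a_{\rvc}/\eta}\) pointwise and \(p_{\mathrm{ref}}\) is a probability measure. Combining them gives
\begin{equation*}
D_{\mathrm{KL}} \le \frac{b_{\rvc}-a_{\rvc}}{\eta} \le \frac{\Delta_{\rvc}}{\eta}.
\end{equation*}
A sharper alternative lower bound is \(\log Z(\rvc) \ge \mathbb{E}_{p_{\mathrm{ref}}}[g]\) by Jensen's inequality, but the crude bound already suffices. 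The log-ratio approximation plays no role in this inequality; it only motivates the definition of \(\widetilde p^*\).

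For the particular case, I would invoke Proposition~\ref{eq:optimal_s}. The pointwise optimum is \(s_i^* = \frac{N_c}{2\lambda} w_i\), and \(w_i = p_i - 1/N_c\) with \(p_i \in (0,1)\). Hence every \(w_i\) lies in \((-1/N_c,\, 1-1/N_c)\), an interval of length \(1\), so \(\max_i s_i^* - \min_i s_i^* < N_c/(2\lambda)\). Under the stated range-preserving assumption we may then take \(\Delta_{\rvc} = N_c/(2\lambda)\), and substituting into the general bound gives \(N_c/(2\lambda\eta)\).

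The only delicate point is the passage from the finite-list optimum to a bounded function on the whole support. This is not proven but is assumed explicitly in the statement, so the main care goes into the measure-theoretic bookkeeping: \(Z(\rvc)\in(0,\infty)\), absolute continuity, and the integrability of \(g\) under \(\widetilde p^*\). All of these follow from boundedness.
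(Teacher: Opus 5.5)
Your proposal is correct and follows the paper's proof essentially step for step. You write the KL as $\mathbb{E}_{\widetilde p^*}[g] - \log Z(\rvc)$, bound the first term above by $b_{\rvc}/\eta$ and $\log Z(\rvc)$ below by $a_{\rvc}/\eta$, and for the particular case use $s_i^* = \frac{N_c}{2\lambda} w_i$ with $w_i = p_i - 1/N_c$ to get a finite-list range of at most $N_c/(2\lambda)$, exactly as in the appendix, with your extra checks on $Z(\rvc)$ and absolute continuity being details the paper leaves implicit.
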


The proof is deferred to Section~\ref{sec:kl-proof} of the appendix.
Intuitively, the corollary states that any bounded full-support extension of
the effective implicit reward with range \(\Delta_{\rvc}\) induces a surrogate
tilted distribution whose KL divergence from the reference model is at most
\(\Delta_{\rvc}/\eta\). Thus, when the extension preserves the finite-list
score range implied by Proposition~\ref{eq:optimal_s}, the regularization
parameter \(\lambda\) controls the sharpness of the surrogate preference tilt.
The bound scales linearly with \(N_c\) under our chosen normalization,
suggesting that \(\lambda\) may be adjusted with the candidate-set size to
control the effective magnitude of the update.

\textbf{Advantage Against Pairwise Objectives.}
Pairwise methods such as Diffusion DPO and DSPO reduce supervision to isolated winner-loser comparisons, discarding the ranking structure available when multiple candidates are observed for the same prompt. This has consequences for the implicit reward they induce: because the underlying pairwise logistic objective can continue decreasing as the preferred-dispreferred margin grows, it does not by itself impose a finite target on the induced implicit reward. A losing sample in a single noisy comparison can be pushed downward even if it is relatively strong within the prompt-level candidate set.

In contrast, our objective yields the finite closed-form optimum \(s_i^*\), assigning each sample a \(\lambda\)-controlled implicit reward according to its relative standing within the full candidate set. Thus, our method better preserves ranking structure and reward spacing: because continuous reward scores can provide a finer-grained signal than individual preference comparisons when the reward model is reliable, this listwise, reward-aware formulation better reflects relative sample quality while remaining robust to noisy or inconsistent comparisons. 


\section{Experiments} \label{sec:experiments}

\begin{figure}[!h]
    \centering
    \includegraphics[
        width=0.85\linewidth,
        height=0.85\textheight,
        keepaspectratio
    ]{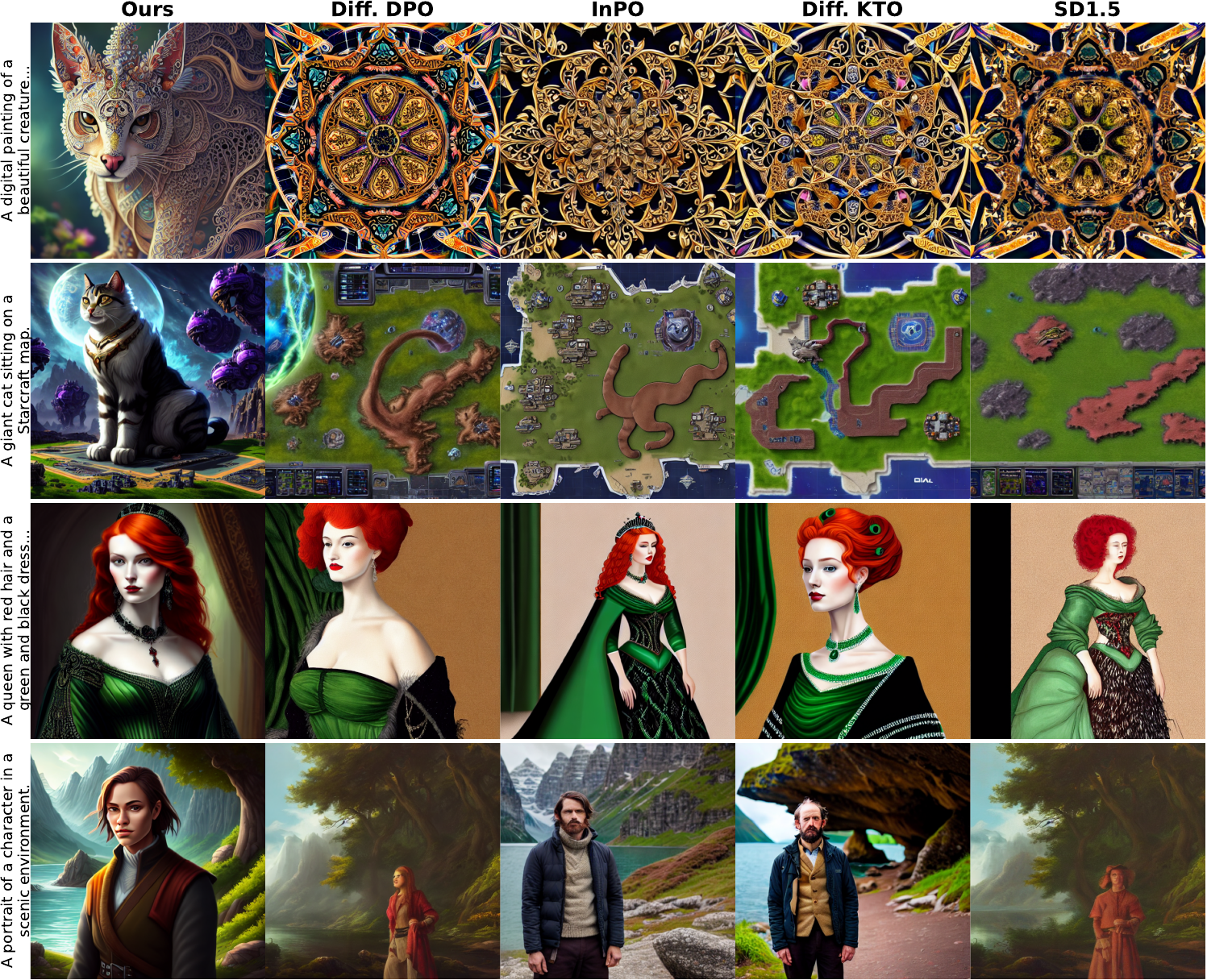}
    \caption{Comparison of images generated by LAIR-SD1.5 and baseline methods. Qualitatively, LAIR-SD1.5 often shows better prompt adherence while maintaining strong visual quality.}
    \label{fig:ours-geneval}
\end{figure}

\subsection{Experimental Setup}

\textbf{Dataset and Models.} We fine-tune Stable Diffusion 1.5 (SD1.5) and Stable Diffusion XL (SDXL) with the Diffusion-LAIR objective on the Pick-a-Pic v2 \citep{kirstain2023pick} dataset. While the standard Pick-a-Pic v2 (Pick V2) dataset consists of image pairs per prompt, we observe that each unique prompt may have multiple corresponding pairs, which we aggregate to form lists of images per prompt. Lists per prompt range from size 2 to $N$, where $N$ is the maximum list size hyperparameter chosen to balance memory overhead and effective listwise supervision. Prior to training, we score all images in the Pick V2 dataset with the PickScore \citep{kirstain2023pick} reward model. To evaluate model alignment with human preferences, we use test prompts from the HPD \citep{wu2023human} and Parti-prompts \citep{yu2022scaling} datasets. Additionally, we evaluate image editing using the InstructPix2Pix dataset \citep{brooks2023instructpix2pix} and compositional generation with the GenEval benchmarking suite \citep{ghosh2023geneval}.

\textbf{Baselines.} We compare our fine-tuned models' generated images against those from existing methods. To measure both human preference alignment and image editing performance, we include comparisons against recent state-of-the-art methods including Diffusion DPO \citep{wallace2024diffusion}, DSPO \citep{zhu2025dspo}, Diffusion KTO \citep{li2024aligning}, MaPO \citep{hong2026margin}, and InPO \citep{lu2025inpo}. We use the official checkpoints for Diffusion DPO, Diffusion KTO, MaPO, and InPO. We fine-tune SD1.5 with the DSPO objective using the official codebase. Due to the high computational cost of applying DSPO at SDXL scale, we omit it from our SDXL comparisons. Additionally, we directly report baseline results on GenEval from \cite{sun2026craft}, including comparison with SmPO \citep{lu2025smoothed}, SPO \citep{liang2025aesthetic}, and CRAFT \citep{sun2026craft}. Our evaluation and the reported GenEval baseline evaluation both use the official GenEval codebase's generation and scoring scripts, so we are confident that results can be fairly compared.

\textbf{Evaluation.}
Following the experimental setup of \cite{zhu2025dspo}, we evaluate human preference alignment by automatically scoring text-to-image generations with reward models. Specifically, we employ PickScore \citep{kirstain2023pick}, HPS v2 \citep{wu2023human}, CLIP \citep{radford2021learning}, LAION Aesthetics Score \citep{schuhmann2022laionaesthetics}, and ImageReward \citep{xu2023imagereward}. For the text-to-image generation and instruction-based image editing experiments, we use the default hyperparameters used by \cite{wallace2024diffusion, zhu2025dspo}. We sample five independent images per prompt and report average scores, keeping the random seeds the same for each baseline. For GenEval evaluation, we use the official codebase and its default hyperparameters. 

\textbf{Computation Cost.}
We train SD1.5 and SDXL using 2 and 3 A100 GPUs, respectively. The total training time is about $\sim$24 GPU hours for SD1.5 and $\sim$140 A100 GPU hours for SDXL. Our method is substantially more efficient than standard baselines like Diffusion DPO and DSPO, which require up to nearly $5$ times more H100 GPU hours on SDXL training, as reported by \cite{sun2026craft}. For more specific training, hyperparameter, and evaluation details, please refer to Section \ref{sec:exp-details}. Additional ablations on reward noise robustness and hyperparameter sweeps are in Section \ref{sec:ablations}.

\subsection{Main Experimental Results}

\begin{table*}[t]
\centering
\caption{Reward score evaluation for human preference alignment on SD1.5 and SDXL models. Average scores for five independent samples per prompt are reported. Best results are \textbf{bolded}, and second-best results are \underline{underlined}. LAIR improves not only the reward model used for training but also transfers to independent reward metrics, compositional generation, and image editing.}
\label{tab:t2i_scores_sd15_sdxl_side_by_side}
\renewcommand{\arraystretch}{1.06}
\setlength{\tabcolsep}{3pt}

\begin{subtable}[t]{0.49\textwidth}
\centering
\caption{SD1.5 models.}
\label{tab:sd15_t2i_scores_compact}
\scriptsize
\resizebox{\linewidth}{!}{
\begin{tabular}{c|c|ccccc}
\toprule
\textbf{Dataset} & \textbf{Method} & \textbf{Pick} & \textbf{HPS} & \textbf{Aes.} & \textbf{CLIP} & \textbf{IR} \\
\midrule
\multirow{6}{*}{Parti-prompt}
& SD1.5      & 21.243 & 0.2738 & 5.360 & 0.3322 & 0.1653 \\
& DSPO       & 21.521 & 0.2813 & \underline{5.658} & 0.3376 & 0.5763 \\
& InPO       & \underline{21.735} & \underline{0.2842} & 5.613 & \underline{0.3457} & \underline{0.7203} \\
& Diff.-DPO  & 21.497 & 0.2773 & 5.445 & 0.3373 & 0.3539 \\
& Diff.-KTO  & 21.550 & 0.2825 & 5.568 & 0.3390 & 0.5941 \\
\rowcolor{blue!10}
& Diff.-LAIR & \textbf{21.992} & \textbf{0.2860} & \textbf{5.671} & \textbf{0.3485} & \textbf{0.8107} \\
\midrule
\multirow{6}{*}{HPD}
& SD1.5      & 20.792 & 0.2723 & 5.567 & 0.3498 & 0.0988 \\
& DSPO       & 21.426 & 0.2833 & \underline{5.873} & 0.3566 & 0.6612 \\
& InPO       & \underline{21.792} & \underline{0.2862} & \underline{5.873} & \underline{0.3659} & \underline{0.8114} \\
& Diff.-DPO  & 21.226 & 0.2767 & 5.688 & 0.3553 & 0.3087 \\
& Diff.-KTO  & 21.430 & 0.2849 & 5.792 & 0.3572 & 0.6835 \\
\rowcolor{blue!10}
& Diff.-LAIR & \textbf{22.068} & \textbf{0.2871} & \textbf{5.904} & \textbf{0.3670} & \textbf{0.8382} \\
\bottomrule
\end{tabular}
}
\end{subtable}
\hfill
\begin{subtable}[t]{0.49\textwidth}
\centering
\caption{SDXL models.}
\label{tab:sdxl_t2i_scores_compact}
\scriptsize
\resizebox{\linewidth}{!}{
\begin{tabular}{c|c|ccccc}
\toprule
\textbf{Dataset} & \textbf{Method} & \textbf{Pick} & \textbf{HPS} & \textbf{Aes.} & \textbf{CLIP} & \textbf{IR} \\
\midrule
\multirow{5}{*}{Parti-prompt}
& SDXL       & 22.425 & 0.2843 & 5.809 & 0.356 & 0.776 \\
& InPO       & \underline{22.723} & \underline{0.2908} & \underline{5.872} & 0.359 & 1.045 \\
& Diff.-DPO  & 22.693 & 0.2894 & 5.824 & \underline{0.365} & \underline{1.066} \\
& MaPO       & 22.399 & 0.2861 & \textbf{5.957} & 0.354 & 0.873 \\
\rowcolor{blue!10}
& Diff.-LAIR & \textbf{22.765} & \textbf{0.2920} & 5.845 & \textbf{0.366} & \textbf{1.104} \\
\midrule
\multirow{5}{*}{HPD}
& SDXL       & 22.589 & 0.2860 & 6.134 & 0.382 & 0.865 \\
& InPO       & \underline{23.001} & \underline{0.2939} & \underline{6.186} & 0.384 & 1.074 \\
& Diff.-DPO  & 22.949 & 0.2921 & 6.141 & \underline{0.388} & \underline{1.082} \\
& MaPO       & 22.638 & 0.2902 & \textbf{6.245} & 0.382 & 0.968 \\
\rowcolor{blue!10}
& Diff.-LAIR & \textbf{23.033} & \textbf{0.2950} & 6.152 & \textbf{0.391} & \textbf{1.130} \\
\bottomrule
\end{tabular}
}
\end{subtable}

\end{table*}

\textbf{General Preference Alignment Results.}
We present comprehensive evaluation of T2I generations on the Parti-Prompt and HPD datasets in Table \ref{tab:t2i_scores_sd15_sdxl_side_by_side}. Our method remains competitive with all baselines, outperforming them on most metrics. Notably, despite only seeing the PickScore reward during training, our models substantially outperform SOTA baselines like DSPO and InPO in other metrics such as ImageReward, demonstrating the ability of our objective to improve general sample quality in a way that transfers beyond the specific reward model used during training. This trend is consistent across both SD1.5 and SDXL, suggesting that the benefits of our listwise reward-weighted objective scale to stronger base models. Additional qualitative results can be found in Section \ref{sec:additional-results}.


\begin{figure}[!htbp]
    \centering
    \begin{minipage}[c]{0.63\linewidth}
        \centering
        \includegraphics[
            width=\linewidth,
            height=0.38\textheight,
            keepaspectratio,
            trim={0.2in 0.2in 0.2in 0.2in},
            clip
        ]{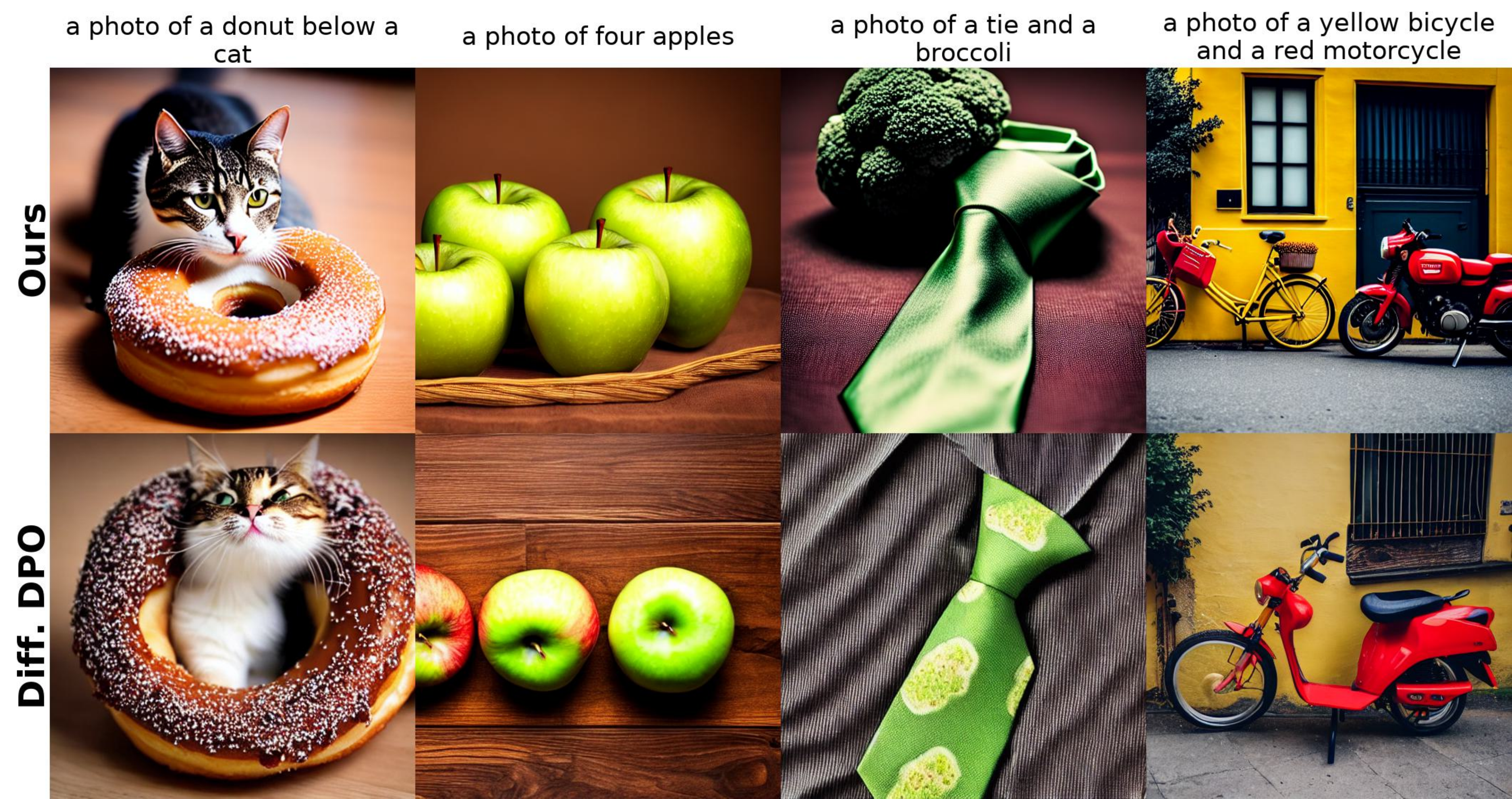}
    \end{minipage}
    \hfill
    \begin{minipage}[c]{0.30\linewidth}
        \caption{Qualitative comparison on GenEval prompts. LAIR-SD1.5 produces generations that better satisfy compositional constraints than Diffusion DPO across representative object, attribute, and counting examples.}
        \label{fig:geneval-compare-baselines}
    \end{minipage}
\end{figure}

\begin{table*}[t]
\centering
\caption{GenEval evaluation results for SD1.5- and SDXL-based models. Best results are \textbf{bolded}, and second-best results are \underline{underlined}.}
\label{tab:geneval_side_by_side}
\renewcommand{\arraystretch}{1.06}
\setlength{\tabcolsep}{3pt}

\begin{subtable}[t]{0.49\textwidth}
\centering
\caption{SD1.5-based models.}
\label{tab:geneval_sd15_compact}
\scriptsize
\resizebox{\linewidth}{!}{
\begin{tabular}{c|ccccccc}
\toprule
\textbf{Method} & \textbf{Overall} & \textbf{Color} & \textbf{Count} & \textbf{Pos.} & \textbf{Single} & \textbf{Attr.} & \textbf{Two} \\
\midrule
SD1.5         & 42.42 & 74.47 & 36.56 & 3.50 & 95.62 & 5.25 & 39.14 \\
SmPO          & 42.86 & 75.53 & 32.19 & 4.00 & 96.25 & 7.25 & 41.92 \\
Diff.-DPO     & 43.28 & 77.39 & 35.00 & 4.75 & \underline{98.12} & 6.00 & 38.38 \\
CRAFT         & \underline{45.82} & \underline{77.66} & \underline{38.12} & \textbf{6.75} & 96.88 & \underline{8.25} & 47.22 \\
SPO           & 44.04 & 72.61 & 33.44 & \underline{6.25} & 95.94 & 6.25 & \underline{49.75} \\
\rowcolor{blue!10}
Diff.-LAIR    & \textbf{51.44} & \textbf{84.31} & \textbf{45.62} & \underline{6.25} & \textbf{99.38} & \textbf{13.25} & \textbf{59.85} \\
\bottomrule
\end{tabular}
}
\end{subtable}
\hfill
\begin{subtable}[t]{0.49\textwidth}
\centering
\caption{SDXL-based models.}
\label{tab:geneval_sdxl_compact}
\scriptsize
\resizebox{\linewidth}{!}{
\begin{tabular}{c|ccccccc}
\toprule
\textbf{Method} & \textbf{Overall} & \textbf{Color} & \textbf{Count} & \textbf{Pos.} & \textbf{Single} & \textbf{Attr.} & \textbf{Two} \\
\midrule
SDXL        & 55.05 & \underline{88.30} & 42.81 & 11.00 & 97.50 & 21.00 & 70.96 \\
Diff.-DPO   & 57.23 & 86.70 & \textbf{45.62} & 11.00 & 98.75 & 20.75 & 80.56 \\
SmPO        & 57.86 & 86.70 & \underline{44.69} & 10.50 & 98.75 & \underline{27.00} & 79.55 \\
SPO         & 55.43 & 84.04 & 38.44 & 11.75 & 97.81 & 20.50 & 80.05 \\
CRAFT       & \underline{57.97} & 87.23 & 36.88 & \textbf{14.50} & \underline{99.06} & 23.75 & \textbf{86.36} \\
\rowcolor{blue!10}
Diff.-LAIR  & \textbf{59.16} & \textbf{90.96} & 39.69 & \underline{13.50} & \textbf{100.00} & \textbf{28.25} & \underline{82.58} \\
\bottomrule
\end{tabular}
}
\end{subtable}

\vspace{0.25em}
{\footnotesize Baseline results are taken directly from \cite{sun2026craft}.}

\end{table*}

\textbf{GenEval and Image Editing Results.}
We next test whether LAIR generalizes beyond the PickScore supervision used during training. Despite being trained only on PickScore-labeled preference data, LAIR achieves stronger compositional performance than prior baselines on GenEval across both SD1.5 and SDXL scales (Table~\ref{tab:geneval_side_by_side}). It also transfers effectively to image editing: following \cite{zhu2025dspo,bai2025towards}, we evaluate SDEdit~\citep{meng2021sdedit} edits on 1000 InstructPix2Pix examples, where LAIR-SDXL outperforms baselines on most metrics (Table~\ref{tab:instructpix2pix_sdxl_winrates}). These results demonstrate strong cross-reward generalization: optimizing only PickScore yields gains in independent compositional and editing evaluations, suggesting that LAIR improves broader semantic alignment rather than merely overfitting the training reward.

\subsection{How Do Cardinal and Ordinal Information Contribute?}
\label{sec:cardinal_ordinal}

We study whether LAIR's gains primarily come from identifying preferred samples or from preserving cardinal reward gaps. For each prompt $c$ with reward-scored candidates $\{(\rvx_0^{(i)}, r_i)\}_{i=1}^{N_c}$, we compare standard LAIR against variants that remove reward magnitude information while keeping the remaining training setup fixed.

\textbf{Ablation setup}
In \emph{rank-only LAIR}, we replace each reward by an equally spaced value determined by its rank,
$
\tilde{r}_i = \frac{N_c-\operatorname{rank}(i)}{N_c-1},
$
preserving the full ordering but discarding cardinal gaps. In \emph{top-1 LAIR}, we retain only the winner identity,
$
\tilde{r}_i = \mathbf{1}\{i=\arg\max_j r_j\}.
$
We also vary the LAIR temperature $\tau$; as $\tau\rightarrow 0$, the softmax weights become increasingly concentrated on the highest-reward candidate. Finally, we compare against RankDPO~\citep{karthik2025scalable}, which optimizes an ordinal listwise ranking objective. Since official RankDPO code is unavailable, we implement it in our training framework.

\begin{table*}[t]
    \centering
    \caption{
        Additional SD1.5 ablations and image editing evaluation.
        Left: effect of cardinal and ordinal reward information on Parti-Prompts.
        Right: InstructPix2Pix win rates against SDXL.
    }
    \label{tab:cardinal_ordinal_editing}
    \renewcommand{\arraystretch}{1.06}
    \setlength{\tabcolsep}{3pt}

    \begin{subtable}[t]{0.57\textwidth}
        \centering
        \caption{Cardinal vs.\ ordinal reward information.}
        \label{tab:cardinal_ordinal}
        \scriptsize
        \resizebox{\linewidth}{!}{
        \begin{tabular}{lccccc}
            \toprule
            \textbf{Method} & \textbf{PickScore} & \textbf{HPS} & \textbf{Aes.} & \textbf{CLIP} & \textbf{IR} \\
            \midrule
            RankDPO
                & 21.642 & 0.2781 & 5.4710 & 0.3381 & 0.3979 \\
            Rank-only LAIR
                & 21.726 & 0.2822 & 5.5270 & 0.3431 & 0.5794 \\
            LAIR ($\tau=1.0$)
                & 21.978 & 0.2858 & 5.6500 & 0.3482 & 0.8011 \\
            \rowcolor{blue!10}
            LAIR ($\tau=0.05$)
                & \textbf{21.992} & \textbf{0.2860} & \textbf{5.6710}
                & \textbf{0.3485} & \textbf{0.8107} \\
            LAIR ($\tau=0.01$)
                & 21.936 & 0.2852 & 5.6287 & 0.3471 & 0.7748 \\
            Top-1 LAIR
                & 21.935 & 0.2852 & 5.6288 & 0.3475 & 0.7802 \\
            \bottomrule
        \end{tabular}
        }

    \end{subtable}
    \hfill
    \begin{subtable}[t]{0.40\textwidth}
        \centering
        \caption{InstructPix2Pix win rates against SDXL.}
        \label{tab:instructpix2pix_sdxl_winrates}
        \scriptsize
        \resizebox{\linewidth}{!}{
        \begin{tabular}{lccccc}
            \toprule
            \textbf{Method} & \textbf{Pick} & \textbf{HPS} & \textbf{Aes.} & \textbf{CLIP} & \textbf{IR} \\
            \midrule
            Diff.-DPO
                & \underline{81.2\%} & 76.5\% & 65.7\% & \underline{71.6\%} & 73.3\% \\
            InPO
                & 78.4\% & \underline{82.6\%} & 73.4\% & 62.4\% & \underline{74.8\%} \\
            MaPO
                & 48.8\% & 40.6\% & \textbf{88.4\%} & 34.8\% & 46.9\% \\
            \rowcolor{blue!10}
            Diff.-LAIR
                & \textbf{86.4\%} & \textbf{86.1\%} & \underline{81.6\%} & \textbf{77.5\%} & \textbf{81.0\%} \\
            \bottomrule
        \end{tabular}
        }
    \end{subtable}
\end{table*}

\textbf{Results}
Table~\ref{tab:cardinal_ordinal} shows that Top-1 LAIR and LAIR with $\tau=0.01$ perform nearly identically, confirming that low temperatures induce a winner-dominated training signal. Their strong performance indicates that winner identification accounts for a substantial portion of LAIR's gains. However, the full-score variants with $\tau=0.05$ and $\tau=1.0$ consistently outperform Top-1 across all five metrics, suggesting an additional benefit from cardinal reward gaps.

Rank-only LAIR performs worse than full LAIR, indicating that preserving the ordering alone does not recover the benefit of continuous rewards. At the same time, rank-only LAIR consistently outperforms RankDPO, suggesting that LAIR's candidate-wise target-regression objective contributes beyond ordinal listwise supervision. Overall, these results indicate that LAIR benefits from both strong winner identification and the graded information contained in continuous reward scores.

\section{Conclusion}

We propose Diffusion LAIR, a novel reward-aware listwise objective for offline preference optimization of diffusion models. Our method directly leverages groups of candidate generations and their continuous reward scores, enabling the model to learn from the full structure of relative sample quality within each prompt. Strong empirical results across several domains highlight the value of listwise reward-aware optimization as a simple and effective direction for aligning diffusion models with human preferences.



\newpage

\bibliography{ref}
\bibliographystyle{reference_style}


\appendix

\newpage

\section{Proofs}

\subsection{Optimal Implicit Reward Proof} \label{sec:opt-proof}

\begin{proof}[Proof of Proposition~\ref{eq:optimal_s}]
For a fixed prompt \(\rvc\), timestep \(t\), noise realizations \(\{\epsilon_i\}_{i=1}^{N_c}\), and listwise weights \(\{w_i\}_{i=1}^{N_c}\), the objective in Eq.~\eqref{eq:ours_loss} reduces to the deterministic function
\begin{equation}
    \mathcal{J}(\{s_i\}_{i=1}^{N_c})
    =
    - \sum_{i=1}^{N_c} w_i s_i
    +
    \frac{\lambda}{N_c} \sum_{i=1}^{N_c} s_i^2 .
\end{equation}
Since \(\mathcal{J}\) is separable across the variables \(\{s_i\}_{i=1}^{N_c}\), each \(s_i\) can be optimized independently. Taking the derivative with respect to \(s_i\) gives
\begin{equation}
    \frac{\partial \mathcal{J}}{\partial s_i}
    =
    -w_i + \frac{2\lambda}{N_c} s_i .
\end{equation}
Setting the derivative equal to zero yields
\begin{equation}
    -w_i + \frac{2\lambda}{N_c} s_i = 0
    \qquad \Longrightarrow \qquad
    s_i^* = \frac{N_c}{2\lambda} w_i,
\end{equation}
for each \(i=1,\dots,N_c\).

It remains to show uniqueness. The Hessian of \(\mathcal{J}\) with respect to \((s_1,\dots,s_{N_c})\) is
\begin{equation}
    \nabla^2 \mathcal{J}
    =
    \frac{2\lambda}{N_c} I_{N_c},
\end{equation}
which is positive definite for \(\lambda > 0\). Therefore, \(\mathcal{J}\) is strictly convex in \(\{s_i\}_{i=1}^{N_c}\), and the critical point above is the unique global minimizer.
\end{proof}

\subsection{Surrogate KL Bound Proof}
\label{sec:kl-proof}

\begin{proof}[Proof of Corollary~1]
Fix a prompt \(\rvc\), and let
\begin{equation}
    f(\rvx_0,\rvc)
    :=
    \frac{S^*(\rvx_0,\rvc)}{\eta}.
\end{equation}
By assumption, \(S^*(\rvx_0,\rvc)\) admits a measurable full-support extension satisfying
\begin{equation}
    a_{\rvc}
    \leq
    S^*(\rvx_0,\rvc)
    \leq
    b_{\rvc},
\end{equation}
for all \(\rvx_0 \in \operatorname{supp}(p_{\mathrm{ref}}(\cdot \mid \rvc))\). Hence
\begin{equation}
    \frac{a_{\rvc}}{\eta}
    \leq
    f(\rvx_0,\rvc)
    \leq
    \frac{b_{\rvc}}{\eta}.
\end{equation}

By definition, the surrogate tilted distribution is
\begin{equation}
    \widetilde p^*(\rvx_0 \mid \rvc)
    =
    \frac{1}{Z(\rvc)}
    p_{\mathrm{ref}}(\rvx_0 \mid \rvc)
    \exp\!\bigl(f(\rvx_0,\rvc)\bigr),
\end{equation}
where
\begin{equation}
    Z(\rvc)
    =
    \mathbb{E}_{p_{\mathrm{ref}}(\cdot \mid \rvc)}
    \left[
        \exp\!\bigl(f(\rvx_0,\rvc)\bigr)
    \right].
\end{equation}
Since \(f(\rvx_0,\rvc)\in [a_{\rvc}/\eta,\, b_{\rvc}/\eta]\), we have
\begin{equation}
    \exp\!\left(\frac{a_{\rvc}}{\eta}\right)
    \leq
    Z(\rvc)
    \leq
    \exp\!\left(\frac{b_{\rvc}}{\eta}\right),
\end{equation}
and therefore
\begin{equation}
    \frac{a_{\rvc}}{\eta}
    \leq
    \log Z(\rvc)
    \leq
    \frac{b_{\rvc}}{\eta}.
    \label{eq:logZ_bounds}
\end{equation}

Next, by the definition of KL divergence,
\begin{align}
    D_{\mathrm{KL}}\!\left(
        \widetilde p^*(\cdot \mid \rvc)
        \,\|\, 
        p_{\mathrm{ref}}(\cdot \mid \rvc)
    \right)
    &=
    \mathbb{E}_{\widetilde p^*(\cdot \mid \rvc)}
    \left[
        \log \frac{\widetilde p^*(\rvx_0 \mid \rvc)}
        {p_{\mathrm{ref}}(\rvx_0 \mid \rvc)}
    \right] \\
    &=
    \mathbb{E}_{\widetilde p^*(\cdot \mid \rvc)}
    \left[
        f(\rvx_0,\rvc) - \log Z(\rvc)
    \right] \\
    &=
    \mathbb{E}_{\widetilde p^*(\cdot \mid \rvc)}
    \left[
        f(\rvx_0,\rvc)
    \right]
    -
    \log Z(\rvc).
\end{align}
Using the upper bound \(f(\rvx_0,\rvc) \leq b_{\rvc}/\eta\) and the lower bound on \(\log Z(\rvc)\) from Eq.~\eqref{eq:logZ_bounds}, we obtain
\begin{equation}
    D_{\mathrm{KL}}\!\left(
        \widetilde p^*(\cdot \mid \rvc)
        \,\|\, 
        p_{\mathrm{ref}}(\cdot \mid \rvc)
    \right)
    \leq
    \frac{b_{\rvc}}{\eta}
    -
    \frac{a_{\rvc}}{\eta}
    =
    \frac{b_{\rvc}-a_{\rvc}}{\eta}
    \leq
    \frac{\Delta_{\rvc}}{\eta}.
\end{equation}
This proves the first claim.

For the second claim, if the full-support extension preserves the range of the finite-list optimum from Proposition~\ref{eq:optimal_s}, then
\begin{equation}
    \Delta_{\rvc}
    =
    \frac{N_c}{2\lambda},
\end{equation}
and therefore
\begin{equation}
    D_{\mathrm{KL}}\!\left(
        \widetilde p^*(\cdot \mid \rvc)
        \,\|\, 
        p_{\mathrm{ref}}(\cdot \mid \rvc)
    \right)
    \leq
    \frac{N_c}{2\lambda\eta}.
\end{equation}
This completes the proof.
\end{proof}

\paragraph{Boundedness on the observed candidate set.}
We now justify the finite-list range used in the second claim. For the observed candidate set
\(\{\rvx_0^{(i)}\}_{i=1}^{N_c}\), Proposition~\ref{eq:optimal_s} gives the pointwise optimum of the sampled implicit-reward contribution as
\begin{equation}
    s_i^*(t,\epsilon_i)
    =
    \frac{N_c}{2\lambda}w_i,
    \qquad
    i=1,\dots,N_c.
\end{equation}
This target is independent of the noising variables \(t\) and \(\epsilon_i\). Therefore, the corresponding clean-level optimal implicit reward on the observed candidate set is
\begin{equation}
    S_i^*
    :=
    S^*(\rvx_0^{(i)},\rvc)
    =
    \mathbb{E}_{t,\epsilon_i}
    \left[
        s_i^*(t,\epsilon_i)
    \right]
    =
    \frac{N_c}{2\lambda}w_i.
\end{equation}
Since \(w_i = p_i - 1/N_c\) and \(0 \leq p_i \leq 1\), each observed clean-level target satisfies
\begin{equation}
    -\frac{1}{2\lambda}
    \leq
    S_i^*
    \leq
    \frac{N_c-1}{2\lambda}.
\end{equation}
Moreover, the range of the clean-level optimal implicit reward over the observed candidate set is
\begin{align}
    \max_i S_i^* - \min_i S_i^*
    &=
    \frac{N_c}{2\lambda}
    \left(
        \max_i w_i - \min_i w_i
    \right) \\
    &=
    \frac{N_c}{2\lambda}
    \left(
        \max_i p_i - \min_i p_i
    \right)
    \leq
    \frac{N_c}{2\lambda}.
\end{align}
Thus, marginalizing the pointwise optimum over the noising variables preserves the same finite-list target and yields a clean-level range of at most \(N_c/(2\lambda)\) on the observed candidate set. This finite-list boundedness alone does not imply a full-distribution KL bound; the bounded full-support extension assumption above is what extends this range control from the observed candidate set to \(\operatorname{supp}(p_{\mathrm{ref}}(\cdot\mid\rvc))\). While idealized, this assumption provides a useful surrogate interpretation of how the finite-list regularization induced by \(\lambda\) can translate into controlled preference tilting at the distribution level.

\section{Experiment Details} \label{sec:exp-details}

\subsection{Dataset and Training Details}

\paragraph{Pick-a-Pic v2 Dataset} Following works like Diffusion DPO and DSPO, we employ the Pick-a-Pic v2 \citep{kirstain2023pick} dataset during training. Pick-a-Pic is a large-scale, publicly available dataset consisting of preferred-dispreferred image pairs. Each entry corresponds to a text prompt, two images, and a binary preference label for each image.

We observe that not every entry in Pick-a-Pic corresponds to a unique prompt. In fact, even though the dataset contains roughly 1 million entries, there are only about $\sim59$k unique prompts total. This allows us to reshape the dataset into prompt-level candidate sets, where each training example consists of a prompt $\rvc$ together with a list of images generated for that prompt. Specifically, for each unique prompt, we aggregate all associated images and their corresponding reward information (scored offline), forming groups of the form ${(\rvx_i, r_i)}_{i=1}^{N_c}$, where $N_c$ denotes the number of available candidates for prompt $\rvc$. In practice, we can subsample or truncate these groups so that each list contains between $2$ and a maximum size $N$ images, enabling listwise preference optimization over multiple candidates for the same prompt rather than treating each pairwise comparison as an independent training example.

\paragraph{Training Details}

We use the AdamW optimizer for training SD1.5 and SDXL. SD1.5 training runs on 2 A100 GPUs, while SDXL training runs on 3 A100 GPUs. We finetune SD1.5 for a total of 2300 gradient update steps, and SDXL for 1500 steps. The specific hyperparameter setup is provided in the following section.

\subsection{Training Hyperparameters}

We provide the detailed hyperparameter configurations for SD1.5 and SDXL training in Table \ref{tab:hyperparameters}. Note that the maximum list size $N$ is reduced for SDXL training to reduce memory overhead.

\begin{table}[h]
\centering
\caption{Hyperparameter settings for SD1.5 and SDXL experiments.}
\label{tab:hyperparameters}
\renewcommand{\arraystretch}{1.12}
\setlength{\tabcolsep}{12pt}
\begin{tabular}{c|cc}
\toprule
\textbf{Hyperparameter} & \textbf{SD1.5} & \textbf{SDXL} \\
\midrule
Learning rate & $3 \times10^{-6}$ & $6 \times 10^{-7}$ \\
Maximum list size $N$ & 30 & 10 \\
Reward temperature $\tau$ & 0.05 & 0.5 \\
Per-GPU Batch size & 1 & 1 \\
Gradient accumulation steps & 16 & 16 \\
Regularization strength $\lambda$ & 0.00025 & 0.00025 \\
CFG prompt dropout proportion & 0.1 & 0.1 \\
\bottomrule
\end{tabular}
\end{table}

\subsection{Evaluation Details}

\paragraph{General Human Preference} We evaluate our model and baselines for general human preference by generating images conditioned on prompts from the Parti-prompts \citep{yu2022scaling} and HPD \citep{wu2023human} datasets. Parti-prompts consists of 1632 prompts and HPD consists of 3200 prompts. For each prompt, we generate 5 images and report average reward scores over the entire dataset. Random seeds are standardized across evaluation runs to ensure fair comparison between baselines. 

\paragraph{GenEval} We evaluate our model on the GenEval benchmark directly from the official codebase. We generate 4 images per prompt and report average scores using the official evaluation script. The baseline results are reported directly from \cite{sun2026craft}, which states that official code and checkpoints are used.

\paragraph{Image Editing} We evaluate image editing capabilities using the InstructPix2Pix \citep{brooks2023instructpix2pix} dataset. We randomly sample 1000 image-prompt pairs from InstructPix2Pix and use SDEdit \citep{meng2021sdedit} with a noise strength of 0.6 to generate image edits. For each prompt we generate 5 images, with random seeds standardized across evaluation runs, and compute average reward scores. The win rate of a model against SDXL is computed as the ratio of image-prompt pairs for which the model's average reward is greater than SDXL's average reward.

\section{Ablation Studies} \label{sec:ablations}

\subsection{Hyperparameter Sweeps}

We conduct three ablation studies to explore the effects of the maximum list size, reward temperature, and regularization scale on model performance. Fixing all other hyperparameters, we vary $N$, $\tau$, and $\lambda$ across SD1.5 training runs and evaluate on a random subset of 100 prompts from Parti-Prompt. 

\textbf{Effect of $N$} As seen in Table \ref{tab:ablation_parti_prompt_sd15}, larger groups generally perform at least as well as smaller groups, with some modest gains in rewards like PickScore and Aesthetics. This suggests that listwise supervision is beneficial, while the method is not overly sensitive to the precise group-size cap. 

\textbf{Effect of $\tau$} We find variation in the reward temperature does not seem to elicit large performance gaps, yet hypothesize that it may still be a useful tuning knob depending on the type of reward(s) used during training. 

\textbf{Effect of $\lambda$} A larger value of $\lambda = 0.00125$ still achieves strong performance relative to baselines, but underperforms the current LAIR result. However, a significantly smaller value of $\lambda = 0.00005$ results in worse performance compared to $\lambda = 0.00025, 0.00125$. This suggests that $\lambda$ is an important hyperparameter that must be carefully chosen to balance preference improvement and regularization.

\begin{table}[h]
\centering
\caption{Ablation results on 100 randomly sampled Parti-Prompt prompts using SD1.5. We report average reward scores across different rewards for 5 independent samples per prompt.}
\label{tab:ablation_parti_prompt_sd15}
\renewcommand{\arraystretch}{1.12}
\setlength{\tabcolsep}{4pt}

\begin{subtable}[t]{0.49\textwidth}
\centering
\caption{Effect of group size $N$.}
\label{tab:ablation_group_size}
\resizebox{\linewidth}{!}{
\begin{tabular}{c|ccccc}
\toprule
\textbf{Method} & \textbf{PickScore} & \textbf{HPS} & \textbf{Aesthetics} & \textbf{CLIP} & \textbf{ImageReward} \\
\midrule
$N=30$ & 21.950 & 0.2847 & 5.645 & 0.351 & 0.748 \\
$N=16$ & 21.960 & 0.2849 & 5.601 & 0.348 & 0.785 \\
$N=8$  & 21.909 & 0.2845 & 5.613 & 0.346 & 0.740 \\
$N=2$  & 21.886 & 0.2840 & 5.580 & 0.348 & 0.718 \\
\bottomrule
\end{tabular}
}
\end{subtable}
\hfill
\begin{subtable}[t]{0.49\textwidth}
\centering
\caption{Effect of reward temperature $\tau$.}
\label{tab:ablation_tau}
\resizebox{\linewidth}{!}{
\begin{tabular}{c|ccccc}
\toprule
\textbf{Method} & \textbf{PickScore} & \textbf{HPS} & \textbf{Aesthetics} & \textbf{CLIP} & \textbf{ImageReward} \\
\midrule
$\tau=1.0$  & 21.930 & 0.2850 & 5.594 & 0.351 & 0.765 \\
$\tau=0.5$  & 21.937 & 0.2845 & 5.631 & 0.350 & 0.779 \\
$\tau=0.05$ & 21.950 & 0.2847 & 5.645 & 0.351 & 0.748 \\
\bottomrule
\end{tabular}
}
\end{subtable}

\vspace{0.5em}

\begin{subtable}[t]{0.49\textwidth}
\centering
\caption{Effect of regularization strength $\lambda$.}
\label{tab:ablation_lambda}
\resizebox{\linewidth}{!}{
\begin{tabular}{c|ccccc}
\toprule
\textbf{Method} & \textbf{PickScore} & \textbf{HPS} & \textbf{Aesthetics} & \textbf{CLIP} & \textbf{ImageReward} \\
\midrule
$\lambda=0.00005$ & 21.323 & 0.2767 & 5.4990 & 0.3350 & 0.5700 \\
$\lambda=0.00025$ & \textbf{21.992} & \textbf{0.2860} & \textbf{5.6710} & \textbf{0.3485} & \textbf{0.8107} \\
$\lambda=0.00125$ & 21.733 & 0.2809 & 5.5287 & 0.3414 & 0.5343 \\
\bottomrule
\end{tabular}
}
\end{subtable}

\end{table}

\subsection{Robustness to Reward Noise}
\label{sec:reward_noise}

We evaluate LAIR's robustness to imperfect reward supervision by perturbing the training rewards with additive Gaussian noise,
$
\tilde{r}_i = r_i + \sigma \epsilon_i,
$
where $\epsilon_i \sim \mathcal{N}(0,1)$. We train SD1.5 models with $\sigma \in \{0.5, 1.0\}$ while keeping all other hyperparameters fixed, and evaluate on the full Parti-Prompts benchmark.

\begin{table}[h]
    \centering
    \caption{
        Robustness of LAIR-SD1.5 to additive Gaussian reward noise.
        Results are reported on Parti-Prompts with five generations per prompt.
    }
    \label{tab:reward_noise}
    \begin{tabular}{c|ccccc}
        \toprule
        \textbf{Reward noise $\sigma$} &
        \textbf{PickScore} &
        \textbf{HPS} &
        \textbf{Aesthetics} &
        \textbf{CLIP} &
        \textbf{ImageReward} \\
        \midrule
        $\sigma=0$   & \textbf{21.992} & \textbf{0.2860} & \textbf{5.6710} & 0.3485 & \textbf{0.8107} \\
        $\sigma=0.5$ & 21.871 & 0.2846 & 5.5630 & \textbf{0.3490} & 0.7639 \\
        $\sigma=1.0$ & 21.759 & 0.2834 & 5.5310 & 0.3484 & 0.6993 \\
        \bottomrule
    \end{tabular}
\end{table}

As shown in Table~\ref{tab:reward_noise}, LAIR remains robust to substantial corruption of the training rewards. Increasing the noise level from $\sigma=0$ to $\sigma=0.5$ causes only modest degradation across most metrics, with CLIP even improving slightly, while the model retains most of its gains over the base SD1.5 model. Even under the more severe setting of $\sigma=1.0$, LAIR continues to achieve strong preference and alignment scores rather than collapsing to baseline performance. The smooth, monotonic degradation suggests that LAIR does not rely on perfectly calibrated reward values and can tolerate considerable additive noise, while still benefiting from cleaner supervision when available.

\section{Additional Results} \label{sec:additional-results}

\begin{figure}[!h]
    \centering
    \vspace{-0.08in}
    \includegraphics[
        width=0.9\linewidth,
        height=0.9\textheight,
        keepaspectratio
    ]{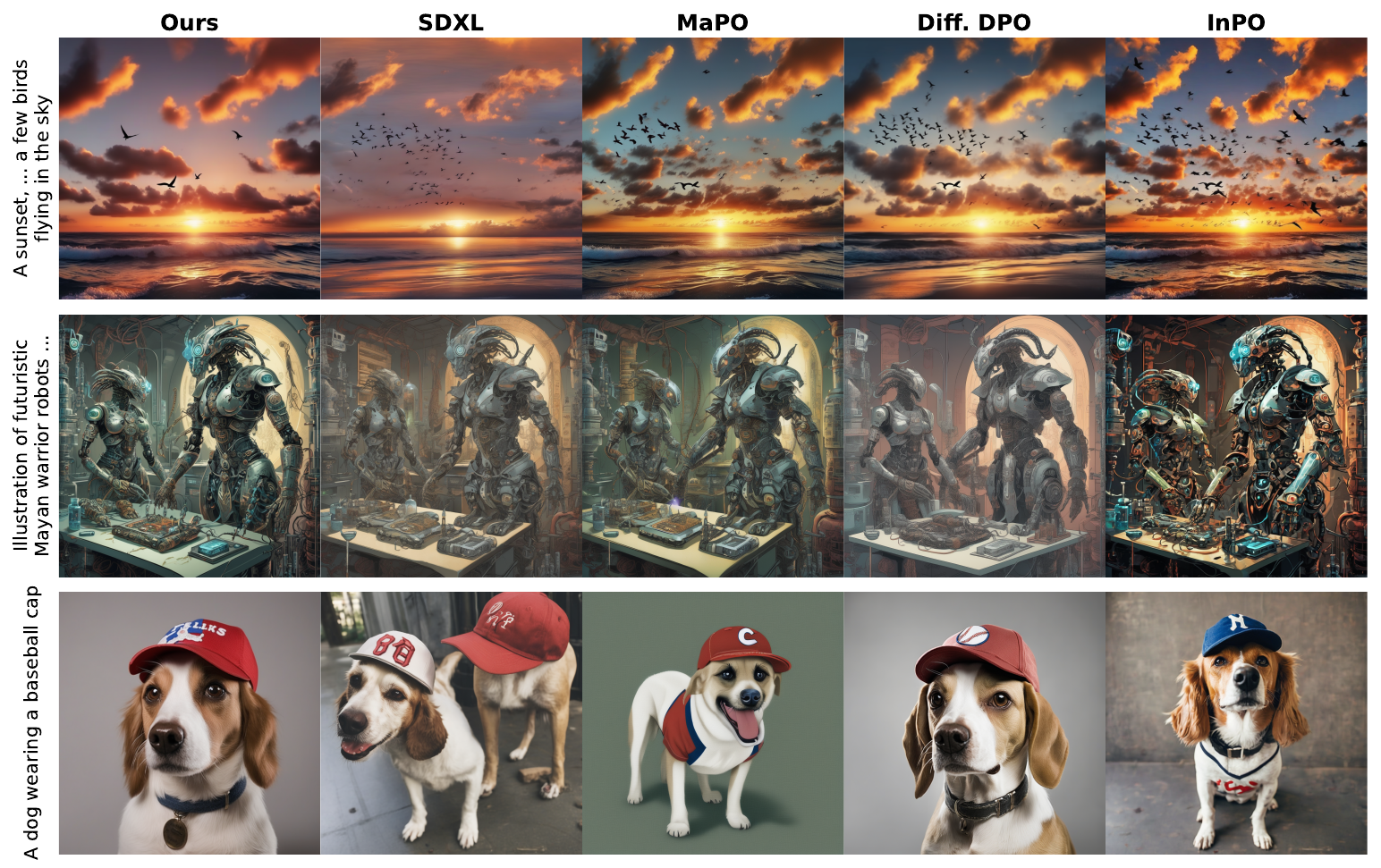}
    \caption{Images generated by Diffusion LAIR (Ours), SDXL, MaPO, Diffusion DPO, and InPO.}
    \label{fig:sdxl-compare-baselines}
    \vspace{-0.08in}
\end{figure}

GenEval images generated using our SD1.5-tuned model are provided in Figure \ref{fig:ours-geneval}, using the following prompts, ordered top to bottom and left to right: 

\begin{itemize}
    \item ``a photo of a green cup and a red pizza''
    \item ``a photo of a yellow handbag and a blue refrigerator''
    \item ``a photo of a green suitcase and a blue boat''
    \item ``a photo of a red cake''
    \item ``a photo of a white dog''
    \item ``a photo of an orange laptop''
    \item ``a photo of a green couch''
    \item ``a photo of three refrigerators''
    \item ``a photo of two backpacks''
    \item ``a photo of a potted plant and a backpack''
    \item ``a photo of a fork and a knife''
    \item ``a photo of two vases''
\end{itemize}

\begin{figure}[!h]
    \centering
    \includegraphics[
        width=1.0\linewidth,
        height=1.0\textheight,
        keepaspectratio
    ]{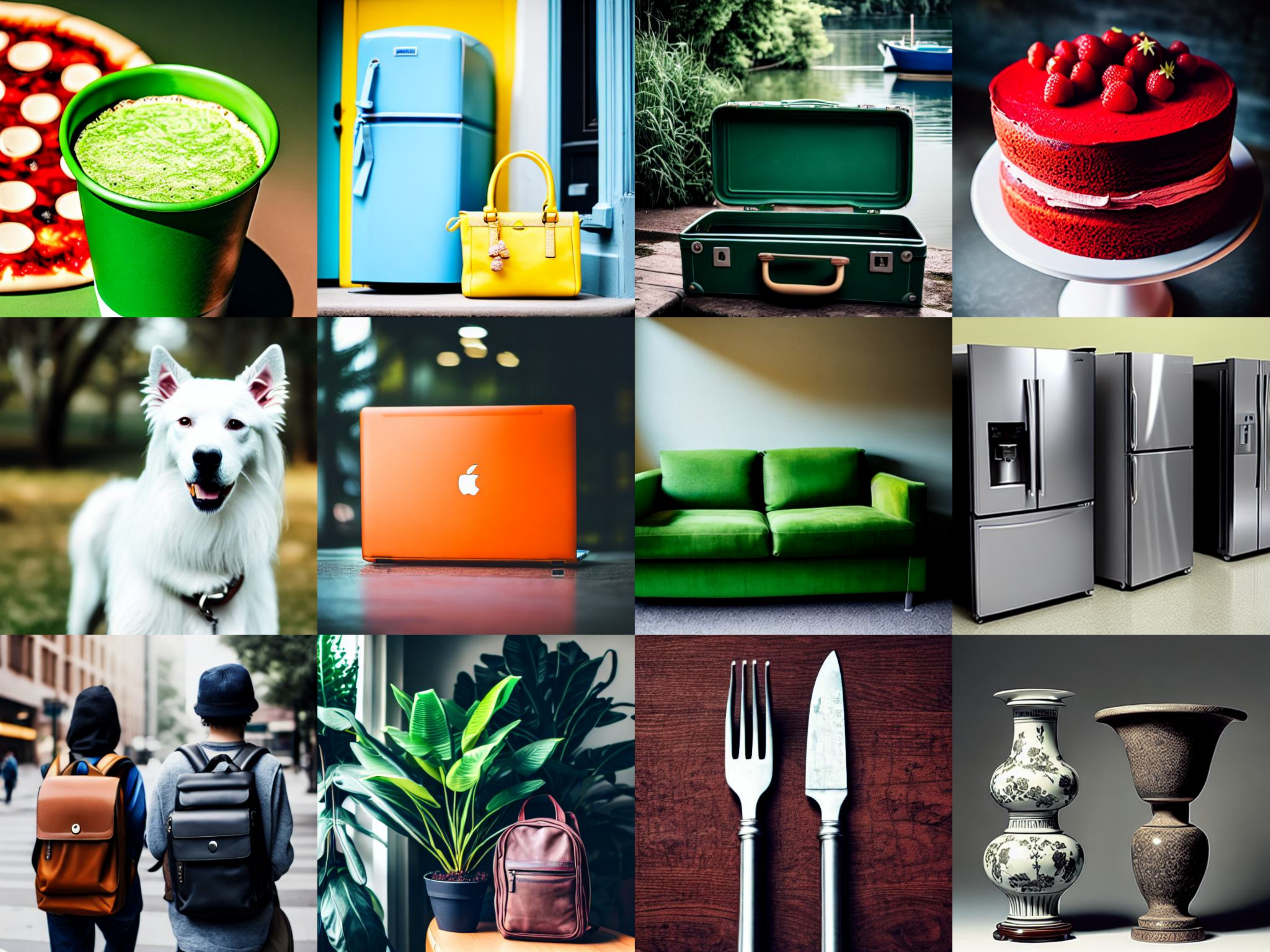}
    \caption{GenEval qualitative results generated by our SD1.5-tuned model.}
    \label{fig:ours-geneval}
\end{figure}

\begin{figure}[!h]
    \centering
    \includegraphics[
        width=1.0\linewidth,
        height=1.0\textheight,
        keepaspectratio
    ]{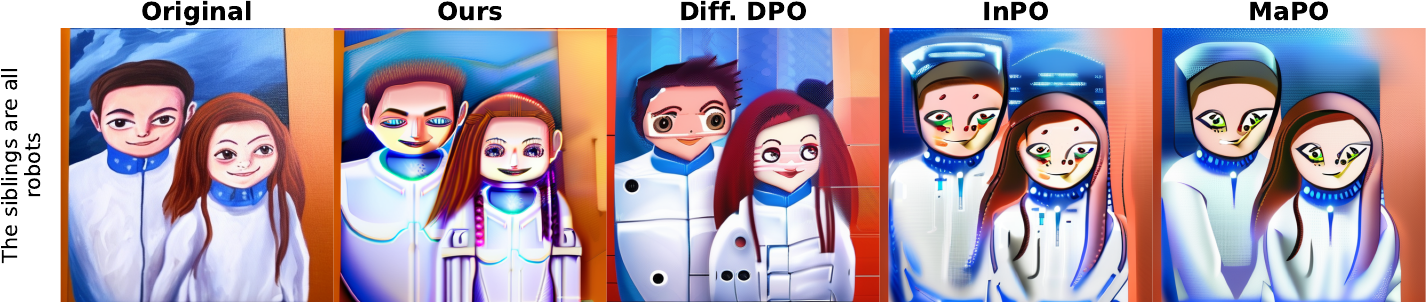}
    \caption{SDEdit qualitative results generated by the SDXL variants.}
    \label{fig:edit-compare}
\end{figure}


\section{Discussion} \label{sec:discussion}

\paragraph{Limitations} Our method relies on a reward model's scores during training. If the reward model itself is noisy or not well aligned with human preferences, the resulting fine-tuned model from our objective may not be well-aligned. The theoretical analysis, specifically for the surrogate KL bound in Section \ref{sec:theory}, relies on several assumptions: the approximation of the log-ratio via the ELBO, and the full-support extension of the implicit reward. We note that the log-ratio approximation is a standard assumption utilized by works such as \cite{wallace2024diffusion, xue2025advantage, bai2025towards}. Despite these assumptions, we still believe that the surrogate KL bound provides useful intuition about the behavior of our method, even if it is not an exact theoretical guarantee.

\paragraph{Broader Impacts} As demonstrated in our paper, post-training diffusion models with our objective leads to significant gains in alignment with human preferences. As our objective learns directly from groups of samples scored with rewards, we believe it is more robust to noisy preference labels in common datasets that arise from human annotation error. Our approach can serve as a building block towards high-quality image generation that is aligned with human preference.



\end{document}